\def\ttabular{%
\hbox\bgroup
\let\\\cr
\def\rulea{\ifnum\rowc=\@ne \hrule height 1.0pt \fi}
\def\ruleb{
\ifnum\rowc=1\hrule height 1.3pt \else
\ifnum\rowc=8\hrule height \heavyrulewidth 
   \else \hrule height \lightrulewidth\fi\fi}
\valign\bgroup
\global\rowc\@ne
\rulea
\hbox to 8em{\strut \hfill##\hfill}%
\ruleb
&&%
\global\advance\rowc\@ne
\hbox to 5em{\strut\hfill##\hfill}%
\ruleb
\cr}
\def\endttabular{%
\crcr\egroup\egroup}
\DeclareMathOperator*{\E}{\mathbb{E}}
\DeclareMathOperator*{\argmax}{arg\,max}
\newtheorem{proposition}{Proposition}
\title{Behavior-Guided Actor-Critic: Improving Exploration via Learning Policy Behavior Representation for Deep Reinforcement Learning}
\author{Ammar Fayad\\
  Massachusetts Institute of Technology\\
  \texttt{afayad@mit.edu} \\
  % examples of more authors
   \And 
   Majd Ibrahim \\
   Higher Institute for Applied Sciences and Technology\\
  % Address \\
   \texttt{majd.ibrahim@hiast.edu.sy} \\
  % \AND
  % Coauthor \\
  % Affiliation \\
  % Address \\
  % \texttt{email} \\
  % \And
  % Coauthor \\
  % Affiliation \\
  % Address \\
  % \texttt{email} \\
  % \And
  % Coauthor \\
  % Affiliation \\
  % Address \\
  % \texttt{email} \\
}
\begin{document}

\maketitle

\begin{abstract}
  In this work, we propose Behavior-Guided Actor-Critic (BAC), an off-policy actor-critic deep RL algorithm. BAC mathematically formulates the behavior of the policy through autoencoders by providing an accurate estimation of how frequently each state-action pair was visited while taking into consideration state dynamics that play a crucial role in determining the trajectories produced by the policy. The agent is encouraged to change its behavior consistently towards less-visited state-action pairs while attaining good performance by maximizing the expected discounted sum of rewards, resulting in an eﬃcient exploration of the environment and good exploitation of all high reward regions. One prominent aspect of our approach is that it is applicable to both stochastic and deterministic actors in contrast to maximum entropy deep reinforcement learning algorithms. Results show considerably better performances of BAC when compared to several cutting-edge learning algorithms.
\end{abstract}
%%%%%%%%%%%%%%%%%%%%%%%%%%%%%%%%%%%%%%%%%%%%%%%%%%%%%%%%%%%%%%%%%%%%%%%%%%%%%%%%%%%%%%%
\section{Introduction}
Reinforcement learning (RL) studies an agent interacting with an environment that is unknown at first. One central challenge in RL is balancing between exploration—moving towards new regions of the problem space and exploitation—taking the best sequence of actions based on previous experiences. Without enough exploration, the agent might fall quickly into a local optimum especially in deceptive environments. Exploration in large RL tasks is still most often performed using simple exploration strategies such as epsilon-greedy \citep{Mnih2015HumanlevelCT}and i.i.d./correlated Gaussian noise \citep{Lillicrap2015ContinuousCW, Wu2017ScalableTM}, which can be inconvenient in more complex settings. Some of the other mathematically-proven exploration methods like PAC-MDP algorithms (MBIE-EB \citep{Strehl2008AnAO}), and Bayesian algorithms (BEB \citep{Kolter2009NearBayesianEI}) are based on counting state-action visitations and assigning a bonus reward to less visited pairs.

Different from other approaches, the BAC agent is assigned an encoder-decoder function that learns feature representations. Specifically, this function extracts the features of the data generated by the policy when performing rollouts. These features are then used to define the policy behavior as they contain information about the most frequently visited regions of both state and action spaces. Our method follows the off-policy actor-critic approach  \citep{degris2012off}, and further aims to consistently change the behavior of the agent resulting in a policy that not only learns to perform effectively in a certain region of the problem spaces but also tries to figure out different solution configurations that may lead to better performances.

This paper starts by introducing the formal concepts used later in the algorithm, we further provide a theoretical analysis and prove that our method converges to the optimal policy regardless of the initial parameterization of the policy network. In our experiments, we evaluate our algorithm on five continuous control domains from OpenAI gym \citep{brockman2016openai}, we also evaluate BAC using a deterministic actor and compare it to the standard stochastic BAC in terms of performance and stability.

\section{Related Work}
A plenty of exploration methods have been widely investigated; a widespread idea is to approximate state or state-action visitation frequency \citep{Tang2017ExplorationAS,Bellemare2016UnifyingCE, Ostrovski2017CountBasedEW, Lang2012ExplorationIR}.  A number of prior methods also examine learning a dynamics model to predict future states \citep{Pathak2017CuriosityDrivenEB,Sorg2010VarianceBasedRF, Lopes2012ExplorationIM, geist2010managing, araya2012near}. Nevertheless, they might be easily deceived by an unpredictable transition of visual stimulus; such phenomenon is called the $Noisy-TV$ problem \citep{burda2018large}. Another interesting (line) of exploration methods is based on the idea of optimism in the face of uncertainty \citep{brafman2002r, Jaksch2008NearoptimalRB, Kearns2004NearOptimalRL}. These methods achieve efficient exploration and offer theoretical guarantees. However, they quickly become intractable as the size of state spaces increases. Some works utilize domain-speciﬁc factors to improve exploration \citep{doshi2010nonparametric, Schmidhuber2006DevelopmentalRO, Lang2012ExplorationIR}. That being said, defining domain-dependent functions (e.g. behavior characterization \citep{Conti2018ImprovingEI}) requires prior knowledge of the environment and might prove to be inaccurate as well as impractical when dealing with tasks that require learning multiple skills. However, we draw inspiration from these methods to formalize a function that provides accurate information about the visitation frequency and simultaneously enable us to apply it to various high dimensional and continuous deep RL benchmarks. Moreover, our method combines actor-critic architecture with an off-policy formulation \citep{Haarnoja2018SoftAO, Lillicrap2015ContinuousCW, chen2019off} that permits the reuse of any past experience for better sample efficiency. The actor and critic, on the other hand, are optimized jointly to conform with large-scale problems. We also employed two critic networks to approximate the value of the policy as proposed by \citep{hasselt2010doubleQ,fujimoto2018addressing} to mitigate positive bias in the policy improvement step that might degrade the performance. 
\section{Preliminaries}
\label{sec-back}

%The basic concepts and notations for RL will be introduced in this section. A general maximum entropy learning framework will also be presented.

\subsection{The Reinforcement Learning Problem}
\label{sub-rl}

We address policy learning in continuous action spaces and continuous state spaces, considering an inﬁnite-horizon Markov decision process (MDP), deﬁned by the tuple $(\mathcal{S},\mathcal{A},p,r)$ with an agent interacting with an environment in discrete timesteps. At each timestep $t$, the agent in state $s_t\in\mathcal{S}\subseteq \mathbb{R}^n$ performs an action $a_t\in\mathcal{A}\subseteq\mathbb{R}^m$ and receives a scalar reward $r_t$. Every action causes a state transition in the environment from state $s_t$ to a new state $s_{t+1}$, governed by the state transition probability $p:\mathcal{S}\times\mathcal{S}\times\mathcal{A}\to [0, \infty)$ which is typically unknown to the agent.
Actions are chosen by a policy, $\pi$, which maps states to a probability distribution over the actions $\pi:\mathcal{S}\to\rho(\mathcal{A})$, under which the agent tries to maximize its long-term cumulative rewards, as described below.
\begin{equation}
J(\pi)= \E_{(s_t\sim p,a_t\sim \pi)}[ \sum_{t=0}^{\infty} \gamma^t r(s_t,a_t)]
\label{eq-lt-cum-rew}
\end{equation}
\noindent
with a discounting factor $\gamma\in[0,1)$. The goal in reinforcement learning is to learn a policy, $\pi^*= \max _{\pi} J(\pi)$.
\subsection{Autoencoders}
An autoencoder is a neural network that is trained to attempt to copy its input to its output. Generally, the autoencoder tries to learn a function $\phi(x)\approx x$, by minimizing a loss funcion\\ (e.g. $\mathcal L= ||\phi (x)-x||^2_2$)\citep{Goodfellow-et-al-2016}. In practice, the autoencoders are unable to learn to copy perfectly. However, their loss functions allow them to copy input that resembles the training data. That is, when they are trained on certain data distributions, they fail to copy input sampled from different distributions $\mathcal D'$, i.e. the distance $||\phi(x')-x'||^2_2$ is relatively large, where $x'\sim \mathcal D'$. This fact is central to defining our proposed behavior function. 

\section{Policy Behavior}
The behavior of a reinforcement learning agent is closely related to the trajectory distribution induced by its policy. However, defining the behavior as a function of the trajectory distribution is not practical since it requires prior knowledge of the state dynamics.

To avoid this pitfall, we introduce a novel domain-independent approach to evaluate the behavior of a policy $\pi$ using neural networks. Furthermore, we initialize a comparatively large autoencoder network $\phi_{\varphi}$, with parameters $\varphi$, and train it on data generated from policy rollouts. The main idea is that after enough training, $\phi_{\varphi}$ yields good representations of trajectories frequently induced by the policy, while poorly representing odd trajectories. Additionally, given that trajectory distribution is affected by the transition probabilities, the autoencoder $\phi_{\varphi}$ provides accurate trajectory representations while taking the dynamics of the environment into consideration.

In practice, however, using a whole trajectory as an input to the neural network can be computationally expensive and difficult to scale to environments with high dimensional continuous state and action spaces as the ones in our experiments.

Alternatively, we build our algorithm on learning the representations of state-action pairs since it is compatible with the classical reinforcement learning framework as demonstrated in the next section. Hence, we can define the behavior value of a state-action pair $(s,a)$ as following:
\begin{equation}
\psi^{\pi_{\theta}}(s,a)= ||\phi^{\pi_{\theta}}_{\varphi}(s,a) - [s,a]||^2_2
\label{behavior-value} \end{equation}
Where $\phi^{\pi_{\theta}}_{\varphi}$ is the autoencoder that corresponds to $\pi_{\theta}$ and $[s,a]$ is the concatenated vector of $(s,a)$.

Note that, as stated before, if $(s,a)$ is frequently visited by ${\pi_{\theta}}$ then $\psi^{\pi_{\theta}}(s,a)$ is relatively small. Consequently, this motivates the formal definition of the policy behavior function $\Psi(\pi_{\theta})$:
\begin{equation}
\Psi(\pi_{\theta})= \mathbb{E}[\psi^{\pi_{\theta}}(s,a)]= \mathbb{E}_{s\sim p, a \sim\pi_{\theta}} [||\phi^{\pi_{\theta}}_{\varphi}(s,a) - [s,a]||_2^2]
\label{policy-behavior} \end{equation} \noindent\\
For simplicity of notation, we will write $\pi, \phi, \psi$ and implicitly mean $\pi_{\theta}, \phi_{\varphi}^{\pi_{\theta}}, \psi^{\pi_{\theta}}$, respectively.

\section{Behavioral Actor-Critic}
The policy is trained with the objective of maximizing the expected return and the future behavior function simultaneously. Hence, we can write the objective function as following:\\
\begin{equation}
J(\pi) = \E_{s\sim p, a \sim\pi} [\sum_{t_0}^{\infty} (\gamma^t r(s_t,a_t)  + \alpha \gamma^{t+1} \psi( s_{t+1},a_{t+1}))]
\label{eq-objective} \end{equation}
The key idea behind maximizing the policy future behavior function is that the expected distance between each state-action pair and its representation is maximized if and only if the policy generates novel data distributions. The latter results in a new autoencoder network that learns to effectively represent the new distributions. In other words, the policy is expected to change its behavior consistently to achieve sufficient exploration of the environment. Also, note that the behavior function is regularized by a scalar $\alpha$ to control how important the exploration is; we use larger $\alpha$ when dealing with environments where policy gradient methods often get stuck. In summary, the previous objective maximization leads to policies that follow optimal strategies.

\subsection{Behavioral Policy Iteration}
We adopt the classical actor-critic framework  and prove that executing policy evaluation and policy improvement steps iteratively leads to convergence to the optimal policy. We, then, extend our theoretical analysis to derive the behavioral actor-critic algorithm.

The policy evaluation step is responsible for evaluating the value of a policy according to the objective in Equation  \eqref{eq-objective}. Let $\pi$ be a fixed policy and define the Bellman operator as follows:\\
\begin{equation}
\mathcal T^\pi Q(s,a)\overset{\text{def}}{=} r(s,a)+ \E_{s'\sim p, a'\sim \pi}[\alpha \psi(s',a')+\gamma Q(s',a')]
\label{bellman} \end{equation}
The former definition motivates the definition of the behavioral $Q$ value. We state this in the following proposition.
\begin{proposition}(behavioral Q-value)
Consider the soft Bellman backup operator $\mathcal T^\pi$ in \eqref{bellman} and an intilization of behavioral Q-value $Q_0: \mathcal S \times \mathcal A\rightarrow \mathbb {R}$ s.t. $|\mathcal A|,|\mathcal S|< \infty$. Define the sequence: $(Q_k)_{k\geq 0}: Q_{k+1} = \mathcal T^\pi Q_k$. Then $\lim\limits_{k \to \infty} Q_k=Q^\pi$, where $Q^\pi$ is the behavioral Q-value of $\pi$.
\end{proposition}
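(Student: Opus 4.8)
The plan is to recognize that $\mathcal T^\pi$, despite the extra behavioral term, is still a $\gamma$-contraction on the space of bounded real-valued functions on $\mathcal S\times\mathcal A$ equipped with the supremum norm, so that the Banach fixed-point theorem yields both a unique fixed point and convergence of the iterates $(Q_k)$ to it; it then remains only to identify that fixed point with $Q^\pi$.

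First I would absorb the behavioral bonus into the reward. Since $\pi$ is fixed and the autoencoder $\phi$ (hence $\psi$) attached to $\pi$ is held fixed during the evaluation step, the quantity $\tilde r(s,a) := r(s,a) + \alpha\,\E_{s'\sim p,\,a'\sim\pi}[\psi(s',a')]$ does not depend on $Q$. With this notation $\mathcal T^\pi Q(s,a) = \tilde r(s,a) + \gamma\,\E_{s'\sim p,\,a'\sim\pi}[Q(s',a')]$, which is exactly the ordinary policy-evaluation Bellman operator for policy $\pi$ with reward $\tilde r$. Next I would check that $\mathcal T^\pi$ preserves boundedness: because $|\mathcal S|,|\mathcal A|<\infty$ and the autoencoder has finite outputs, $\psi$ is bounded, and assuming $r$ is bounded (automatic on a finite $\mathcal S\times\mathcal A$), $\tilde r$ is bounded, so $(\mathcal B(\mathcal S\times\mathcal A),\|\cdot\|_\infty)$ is a complete metric space that $\mathcal T^\pi$ maps into itself.

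Then, for any two bounded $Q_1,Q_2$, the $\tilde r$ terms cancel and
\[
\|\mathcal T^\pi Q_1 - \mathcal T^\pi Q_2\|_\infty = \gamma\,\big\|\E_{s'\sim p,\,a'\sim\pi}[Q_1(s',a') - Q_2(s',a')]\big\|_\infty \le \gamma\,\|Q_1 - Q_2\|_\infty ,
\]
using that an expectation is bounded in absolute value by the sup norm of its integrand; since $\gamma\in[0,1)$ this is a strict contraction. By the Banach fixed-point theorem, $Q_{k+1}=\mathcal T^\pi Q_k$ converges geometrically in $\|\cdot\|_\infty$ (hence pointwise) to the unique fixed point $Q^\star$ of $\mathcal T^\pi$. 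Finally I would observe that the behavioral $Q$-value $Q^\pi$ — the expected discounted sum of rewards-plus-behavioral-bonuses in \eqref{eq-objective}, written in recursive form — satisfies $Q^\pi = \mathcal T^\pi Q^\pi$ by construction, so uniqueness of the fixed point forces $Q^\star = Q^\pi$.

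There is essentially no deep obstacle here: the only point requiring care is that the behavioral term $\alpha\psi$ is genuinely independent of $Q$, so it acts as a reward-shaping term and does not touch the contraction factor $\gamma$, and that it is bounded so the fixed point is finite — both guaranteed by the hypothesis $|\mathcal S|,|\mathcal A|<\infty$. The one mild gap worth flagging is that the statement does not explicitly assume $r$ bounded; I would either add that hypothesis or note that on a finite $\mathcal S\times\mathcal A$ it is automatic, which is what makes the completeness argument go through.
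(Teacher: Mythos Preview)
Your proposal is correct and follows essentially the same route as the paper's appendix: absorb the behavioral bonus $\alpha\psi$ into an augmented reward $\tilde r$, observe that $\mathcal T^\pi$ then coincides with the ordinary policy-evaluation Bellman operator for $\pi$ under $\tilde r$, and invoke standard policy-evaluation convergence. The only difference is cosmetic: the paper simply appeals to the classical convergence result once the reduction is made, whereas you spell out the $\gamma$-contraction and Banach fixed-point argument explicitly; both arrive at the same conclusion by the same mechanism.
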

\begin{proof} 
See Appendix A.1.
\end{proof}

The policy improvement step produces a new policy that maximizes the old policy's expected $Q$ value for any given state. Hence, it can be defined as: 
\begin{equation} 
\forall s \in \mathcal S:~~\pi_{new}(.,s)=\argmax_\pi \E_{a\sim\pi}[Q^{\pi_{old}}(s,a)].
\label{pinew}\end{equation}
We show that the new policy has a higher value than the old policy when following the augmented objective. More formally, we demonstrate this result in the following proposition.  
\begin{proposition}\label{c}(Policy improvement) Assume that $\pi_{old}$ and $\pi_{new}$ are the current policy, and the new policy defined in  \eqref{pinew}, respectively. Then $Q^{\pi_{new}}(s,a) \geq Q^{\pi_{old}}(s,a)$ for all $(s,a) \in \mathcal S \times \mathcal A$.
\end{proposition}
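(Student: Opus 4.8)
The plan is to run the classical policy–improvement argument, adapted to the behavioral Bellman operator $\mathcal T^{\pi}$ of \eqref{bellman}, as a monotone operator iteration (the same template used for soft actor–critic). The one fact that comes for free is a \emph{one-step} inequality: by the definition \eqref{pinew} of $\pi_{new}$, for every $s\in\mathcal S$ the policy $\pi_{old}(\cdot,s)$ is admissible in the maximization that produces $\pi_{new}(\cdot,s)$, so
\[
\E_{a\sim\pi_{new}}\big[Q^{\pi_{old}}(s,a)\big]\ \ge\ \E_{a\sim\pi_{old}}\big[Q^{\pi_{old}}(s,a)\big].
\]
I would then try to promote this to the pointwise statement $\mathcal T^{\pi_{new}}Q^{\pi_{old}}\ge Q^{\pi_{old}}$: writing $Q^{\pi_{old}}=\mathcal T^{\pi_{old}}Q^{\pi_{old}}$ and $\mathcal T^{\pi_{new}}Q^{\pi_{old}}$ side by side, they differ only in that the inner expectation is taken under $\pi_{new}$ instead of $\pi_{old}$, and the displayed inequality handles the $\gamma Q^{\pi_{old}}(s',a')$ piece.

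Granting $\mathcal T^{\pi_{new}}Q^{\pi_{old}}\ge Q^{\pi_{old}}$, the remainder is routine. The operator $\mathcal T^{\pi_{new}}$ is monotone (it is built from expectations and multiplication by $\gamma\ge 0$), so by induction the sequence $\big((\mathcal T^{\pi_{new}})^{k}Q^{\pi_{old}}\big)_{k\ge 0}$ is nondecreasing; by the convergence statement of Proposition~1 applied with policy $\pi_{new}$ it converges to $Q^{\pi_{new}}$. Passing to the limit in $Q^{\pi_{old}}\le(\mathcal T^{\pi_{new}})^{k}Q^{\pi_{old}}$ yields $Q^{\pi_{old}}(s,a)\le Q^{\pi_{new}}(s,a)$ for all $(s,a)\in\mathcal S\times\mathcal A$, which is the claim. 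Equivalently one can unroll the recursion by hand: substitute the Bellman identity for $Q^{\pi_{old}}$, apply the one-step inequality, and repeat; after $k$ steps one obtains a $k$-step rollout under $\pi_{new}$ plus a $\gamma^{k}$-weighted remainder that vanishes since $\gamma<1$ and $r,\psi$ (hence $Q^{\pi_{old}}$) are bounded.

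The step I expect to be the main obstacle is exactly the promotion of the one-step inequality to $\mathcal T^{\pi_{new}}Q^{\pi_{old}}\ge Q^{\pi_{old}}$: the definition \eqref{pinew} controls only the $Q^{\pi_{old}}$ term inside the Bellman integrand, not the behavior bonus $\E_{a'\sim\pi}[\alpha\psi(s',a')]$, so $\mathcal T^{\pi_{new}}Q^{\pi_{old}}-Q^{\pi_{old}}$ carries an extra term $\alpha\,\E_{s'}\big[\E_{a'\sim\pi_{new}}\psi(s',a')-\E_{a'\sim\pi_{old}}\psi(s',a')\big]$ of indeterminate sign. I would close this by folding the bonus into a reshaped reward: since within a single improvement step the autoencoder $\phi_{\varphi}$ (hence $\psi$) is frozen, set $\hat r(s,a):=r(s,a)+\tfrac{\alpha}{\gamma}\psi(s,a)$; then \eqref{bellman} becomes an ordinary Bellman operator whose fixed point is $\hat Q^{\pi}:=Q^{\pi}+\tfrac{\alpha}{\gamma}\psi$, and $\pi_{new}$ taken greedy with respect to $\hat Q^{\pi_{old}}$ — i.e. maximizing precisely the integrand of the recursion — makes the classical greedy–improvement theorem apply verbatim, giving $\hat Q^{\pi_{new}}\ge\hat Q^{\pi_{old}}$ and hence $Q^{\pi_{new}}\ge Q^{\pi_{old}}$. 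The one bookkeeping point I would want to pin down is that the maximand in \eqref{pinew} should be read as this integrand; with that understood the argument goes through cleanly.
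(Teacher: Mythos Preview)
Your overall strategy---show $\mathcal T^{\pi_{new}}Q^{\pi_{old}}\ge Q^{\pi_{old}}$, then iterate using monotonicity of $\mathcal T^{\pi_{new}}$ and Proposition~1---is exactly the standard unrolling argument the paper's Appendix~A.2 follows (it is the SAC template adapted to the behavioral operator \eqref{bellman}). In that sense your plan and the paper's proof coincide.

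Where you go further than the paper is in flagging, correctly, that the one-step inequality coming from \eqref{pinew} controls only $\E_{a'\sim\pi}[Q^{\pi_{old}}(s',a')]$, whereas the integrand of $\mathcal T^{\pi}$ also contains $\alpha\psi(s',a')$; hence $\mathcal T^{\pi_{new}}Q^{\pi_{old}}-\mathcal T^{\pi_{old}}Q^{\pi_{old}}$ carries the extra term $\alpha\,\E_{s'}\big[\E_{a'\sim\pi_{new}}\psi(s',a')-\E_{a'\sim\pi_{old}}\psi(s',a')\big]$ of unknown sign. The paper's argument implicitly treats the greedy step as acting on the full integrand, i.e.\ on $\alpha\psi+\gamma Q^{\pi_{old}}$, which is precisely the reading you propose. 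Your reshaping $\hat r=r+\tfrac{\alpha}{\gamma}\psi$, $\hat Q^{\pi}=Q^{\pi}+\tfrac{\alpha}{\gamma}\psi$ makes this explicit and reduces the claim to the classical policy-improvement theorem, which is cleaner than the paper's handling. Your caveat is the right one: with \eqref{pinew} taken literally (greedy only in $Q^{\pi_{old}}$) the inequality $\mathcal T^{\pi_{new}}Q^{\pi_{old}}\ge Q^{\pi_{old}}$ need not hold, so the statement requires exactly the reinterpretation you describe.
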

\begin{proof} 
See Appendix A.2.
\end{proof}

Consequently, executing behavioral policy iterations results in a monotonically increasing sequence $Q^{\pi_k}$ which is bounded since $r(s,a), \psi(s,a)$ are bounded for all $(s,a) \in \mathcal S \times \mathcal A$; this results in the convergence of the sequence for some $\pi^*$. Moreover, according to proposition \autoref{c}, \\ $ Q^{\pi^*}(s,a)> Q^{\pi}(s,a),~ \forall \pi \neq \pi^*, (s,a) \in \mathcal S \times \mathcal A$. Thus, $J(\pi^*)>J(\pi)$, i.e. $\pi^*$ is an optimal policy.  

\subsection {Algorithm}
The theoretical analysis discussed above demonstrated the compatibility of our model with the classical reinforcement learning architectures. Based on this, we used an off-policy actor-critic approach. To that end, we will use function approximators for both the behavioral Q-function and the policy, and optimize both networks by stochastic gradients. Specifically, we adopt the framework presented in \citep{hasselt2010doubleQ,fujimoto2018addressing} by using double critic networks $Q_{\omega_{1,2}}$ where each critic independently approximates the policy's action value, and an actor network $\pi_{\theta}$. Furthermore, separate copies of the critic networks: $ Q'_{\omega'_{1,2}}$ are kept as target networks for stability. These networks are updated periodically using the critics: $ Q_{\omega_{1,2}}$ which are regulated by a weighting parameter $\tau$. Conventionally, the autoencoder is periodically trained on state-action pairs induced by the policy when performing a rollout to minimize $\psi$. 

The behavioral Q-functions parameters can be trained to minimize the following loss
\begin{equation}
\begin{split}
\mathcal L_Q(\omega_{1,2})=&\mathbb{E} _{(s_t,a_t)\sim \mathcal B}[(r(s_{t},a_{t})+\\
& \mathbb{E}_{s_{t+1}\sim \mathcal B, a_{t+1} \sim \pi_{cur}}   [  \alpha \psi^{\pi_{cur}}(s_{t+1},a_{t+1})+\gamma \min_{j=1,2} Q'_{\omega'_j}(s_{t+1},a_{t+1})]-Q_{\omega_{1,2}}(s_t,a_t))^2]
\label{qloss}\end{split}\end{equation}
The replay buffer $\mathcal B$ stores past transitions, and $\pi_{cur}$ refers to the current policy of interest. Consequently, $\mathcal L$ can be optimized by stochastic gradients:
\begin {equation}\begin{split}
%\overset{{\hat}}{\nabla} Q
\hat{\nabla}_{\omega_{1,2}} \mathcal L_Q(\omega_{1,2})=&-2 \nabla_{\omega_{1,2}} Q_{\omega_{1,2}}(s_{t},a_{t})\\
&(r(s_{t},a_{t})+  \alpha \psi^{\pi_{cur}}(s_{t+1},a_{t+1})+\gamma \min_{j=1,2} Q'_{\omega'_j}(s_{t+1},a_{t+1})-Q_{\omega_{1,2}}(s_t,a_t))
\label{grad-q}\end{split}\end{equation}
Finally, the policy parameters can be learned by directly maximizing the expected behavioral Q-value in Equation \eqref{j-grad}:\\
\begin{equation}
J(\pi)=\mathbb E_{s\sim \mathcal B, a\sim \pi}[\min_{j=1,2}Q_{\omega_j}(s,a)]
\label{j-grad}\end{equation}
That is, we follow the gradient of $J$:
\begin{equation}
{\nabla}_{\theta}J=\mathbb E_{s\sim \mathcal B, a\sim \pi}[\min_{j=1,2}Q_{\omega_j}(s,a)\nabla_{\theta}\log \pi(a|s)]
\label{j-grad2}\end{equation}
Alternatively, we can apply the reparameterization trick mentioned in \citep{kingma2013auto}, by assuming that $a_t=g_{\theta}(\epsilon_t; s_t)$, where $\epsilon_t$ is an input vector sampled from Gaussian distribution. This allows us to optimize $J$ by following the approximate gradient:
\begin{equation}
\hat{\nabla}_{\theta} J= \nabla_{\theta}g_{\theta}(\epsilon_t; s_t) \nabla_a \min_{j=1,2}Q_{\omega_j}(s_t,a)|_{a=g_{\theta}(\epsilon_t; s_t)}
\label{j-grad3}\end{equation}
\begin{algorithm}[!ht]
 \begin{algorithmic}
 \STATE Initialize the critics, actor, and autoencoder networks, $Q_{\omega_{1,2}}$, $\pi_{\theta}$ and $\phi_{\varphi}$, respectively, and a replay buffer $\mathcal{B}$ that stores past transition samples for training.
 \STATE {\bf for} each iteration $i$ {\bf do}:
 \STATE \ \ \ \ {\bf if} $i$ mod $autoencoder ~update frequency$ {\bf do}:
 \STATE \ \ \ \ \ \ \ \ Obtain trajectory $\mathcal{J}$.
 \STATE \ \ \ \ \ \ \ \ Update autoencoder parameters:
 \STATE \ \ \ \ \ \ \ \ $\varphi\leftarrow \varphi-\lambda_{\varphi}\nabla_{\varphi}\psi^{\mathcal{J}}$
 \STATE \ \ \ \ {\bf for} each environment step $t$ {\bf do}:
 \STATE \ \ \ \ \ \ \ \ Sample and perform $a_t\sim \pi_{\theta}(s_t,\cdot)$
 \STATE \ \ \ \ \ \ \ \ Add $(s_t,a_t,s_{t+1},r_t)$ to $\mathcal{B}$
  \STATE \ \ \ \ \ \ \ \ {\bf for} each update step {\bf do}:
 \STATE \ \ \ \ \ \ \ \ \ \ \ \ Sample a random batch $\mathcal{D}$ from $\mathcal{B}$.
 \STATE \ \ \ \ \ \ \ \ \ \ \ \ Update critics parameters:

 \STATE \ \ \ \ \ \ \ \ \ \ \ \ \ \ \ \ $\omega_{j}\leftarrow\omega_j-\lambda_{\omega_j}\nabla_{\omega_j}\mathcal L (\omega_j)$ for $j\in\{1,2\}$
 \STATE \ \ \ \ \ \ \ \ \ \ \ \ Update actor parameters:
 \STATE \ \ \ \ \ \ \ \ \ \ \ \ \ \ \ \ $\theta\leftarrow\theta+\lambda_{\theta}\nabla_{\theta}J$
 \STATE \ \ \ \ \ \ \ \ \ \ \ \ Update target networks parameters:
 \STATE \ \ \ \ \ \ \ \ \ \ \ \ \ \ \ \ $\omega'_{j}\leftarrow\tau\omega_j+ (1-\tau)\omega'_j$ for $j\in\{1,2\}$
 \end{algorithmic}
\caption{Behavior-Guided Actor-Critic Algorithm.}
\label{algorithm-1}
\end{algorithm}
\section{Experiments}
\subsection{Empirical Evaluation}

We measure the performance of our algorithm\footnote{Code available at: \url{https://github.com/AmmarFayad/Behavioral-Actor-Critic}} on a suite of PyBullet \citep{tan2018sim} continuous control tasks, interfaced through OpenAI Gym\citep{brockman2016openai}. While many previous works utilized the Mujoco \citep{todorov2012mujoco} physics engine to simulate the system dynamics of these tasks, we found it better to evaluate BAC on benchmark problems powered by PyBullet simulator since it is widely reported that PyBullet problems are harder to solve \citep{tan2018sim} when compared to Mujoco. Also, Pybullet is license-free, unlike Mujoco that is only available to its license holders.
\begin{figure}[h!]
\centering
  \begin{subfigure}[b]{0.3\linewidth}
    \includegraphics[width=\linewidth]{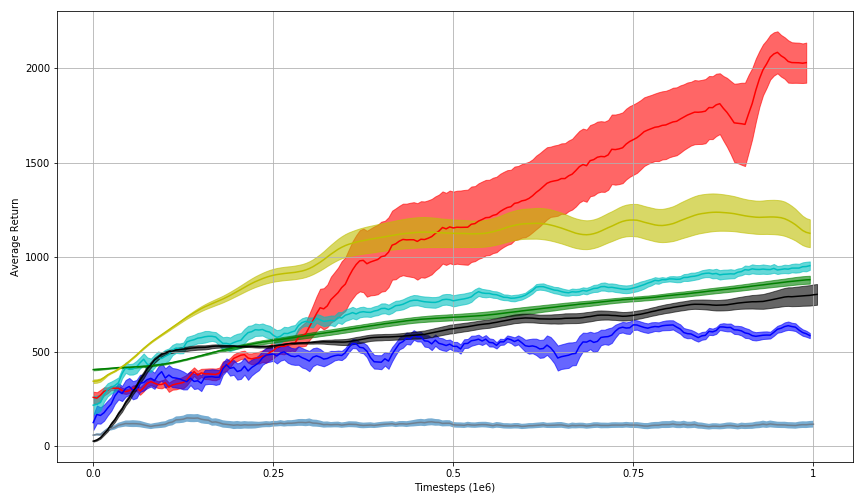}
     \caption{Ant}
  \end{subfigure}
\begin{subfigure}[b]{0.3\linewidth}
    \includegraphics[width=\linewidth]{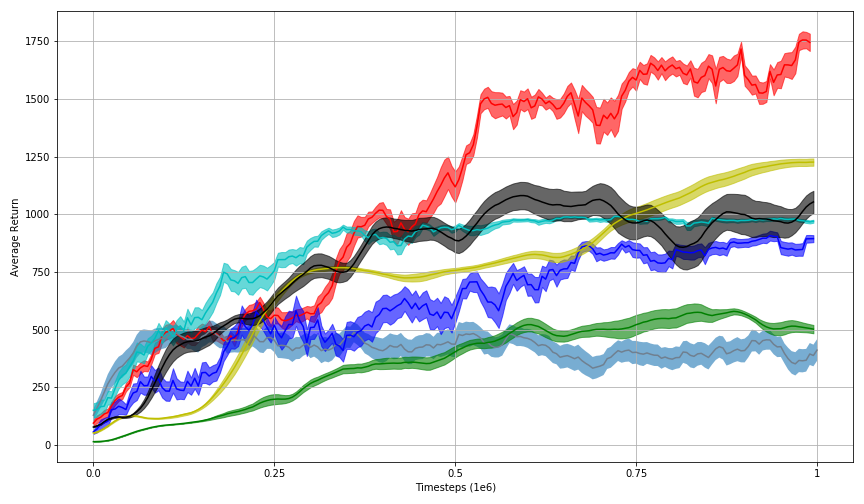}
     \caption{Walker2D}
\end{subfigure}
\begin{subfigure}[b]{0.3\linewidth}
    \includegraphics[width=\linewidth]{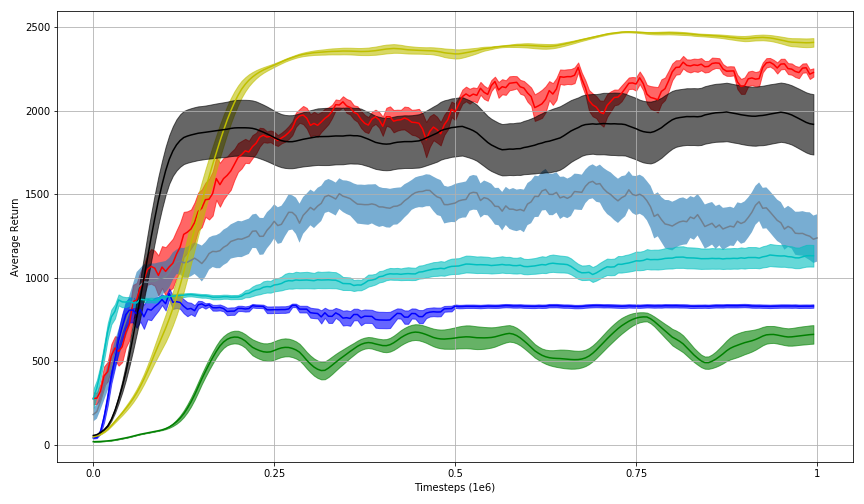}
     \caption{Hopper}
\end{subfigure}
\begin{subfigure}[b]{0.3\linewidth}
    \includegraphics[width=\linewidth]{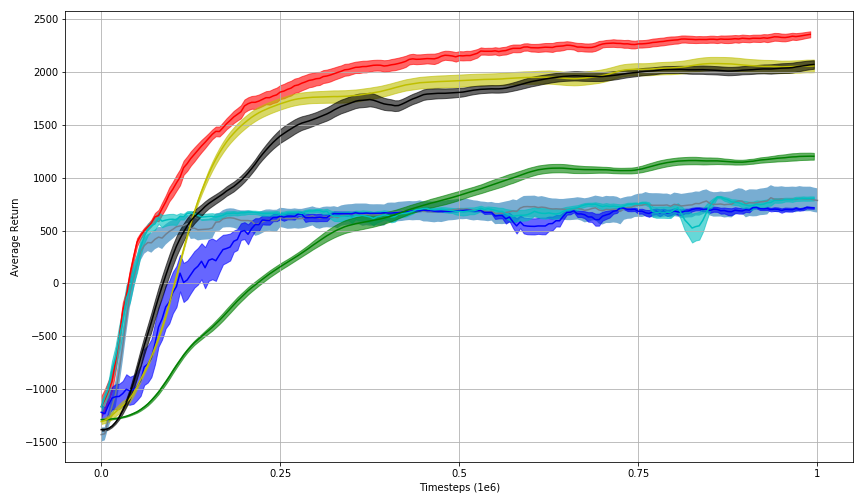}
     \caption{HalfCheetah}
  \end{subfigure} 
\begin{subfigure}[b]{0.3\linewidth}
    \includegraphics[width=\linewidth]{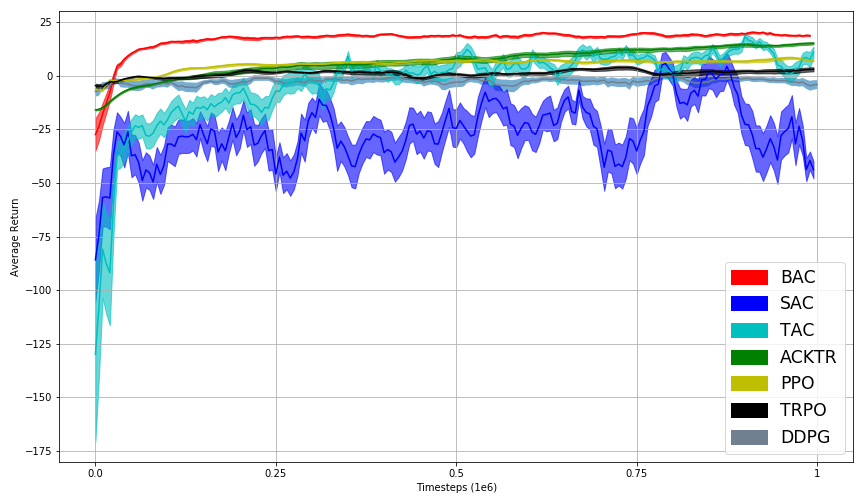}
     \caption{Reacher}
\end{subfigure}
\caption{Learning curves for the OpenAI gym continuous control tasks. The shaded region represents quarter a standard deviation of the average evaluation. Curves are smoothed uniformly for visual clarity.}
\label{1}
\end{figure}
We compare our method to soft actor critic (SAC)\citep{Haarnoja2018SoftAO}; Tsallis actor-critic (TAC)\citep{chen2019off}, a recent off-policy algorithm for learning maximum entropy policies, where we use the implementation of authors\footnote{ \url{https://github.com/haarnoja/sac}} \footnote{ \url{ https://github.com/yimingpeng/sac-master}};  proximal policy optimization (PPO)\citep{schulman2017proximal}, a stable and efficient on-policy policy gradient algorithm; deep deterministic policy gradient (DDPG)\citep{Lillicrap2015ContinuousCW}; trust region policy optimization (TRPO)\citep{Schulman2015TrustRP}; and Actor Critic using Kronecker-Factored Trust Region (ACKTR)\citep{Wu2017ScalableTM}, as implemented by OpenAI’s baselines repository \footnote{ \url{https://github.com/openai/baselines}}. Each task is run for 1 million time steps and the average return of 15 episodes is reported every 5000 time steps. To enable reproducibility,  each experiment is conducted on 3 random seeds of Gym simulator and network initialization.

In order for all $\psi$ features to contribute proportionally to the behavior function, it seems convenient to enforce feature scaling since state-action pairs may correspond to a broad range of $\psi$ values. One possible solution is to normalize $\psi$ by considering $\bar{\psi}(s,a)=\frac{\psi(s,a)}{\max\psi(s,a)}$. 
As a matter of fact, some environments are highly stochastic in terms of transition dynamics which makes learning unstable, not to mention the direct effect of the behavioral Q-function on the scale of gradient that may lead to drastic updates of the policy and even bad convergence. Motivated by this idea, we limit the policy gradient to an appropriate range by a global norm $\mathcal N$. Specifically, we applied this technique to HalfCheetah Bullet environment setting $\mathcal N=3$.

\begin{table*}[t]
\centering
\caption{Max Average Return over trials of 1 million time steps. the best performing algorithm for each task is bolded. $\pm$ corresponds to a single standard deviation over trials. }
\label{1results}
%\vskip 0.15in
%\hskip -0.4in
\begin{center}
\begin{scriptsize}
\begin{tabular}{lcccccc}
\toprule
\bf{Algorithm} & \bf{HalfCheetah} & \bf{Reacher} & \bf{Ant} & \bf{Walker2D} & \bf{Hopper} \\
\midrule
BAC 		& \bf{2369.15 $\pm$ 104.53}	 & \bf{23.42 $\pm$ 0.49} 	& \bf{2122.81 $\pm$ 392.34}	& \bf{1814.93 $\pm$ 153.59} 	& 2384.24 $\pm$ 111.42   \\  
DDPG 	& {830.13 $\pm$ 467.08}		 & 1.68 $\pm$ 5.18 				& 177.33 $\pm$ 112.56		& 586.84 $\pm$ 154.66		& 1650.68 $\pm$ 392.73  \\
SAC 		& {777.37 $\pm$ 60.5} 		 & 24.24 $\pm$ 5.56				& 685.53 $\pm$ 26.49 		& 932.92 $\pm$ 74.06		& 972.28 $\pm$ 41.23  \\
TAC 		& {844.58 $\pm$ 45.08} 		 & 25.6 $\pm$ 6.97 			& 974.0 $\pm$ 127.21 		& 1000.54 $\pm$ 1.3		& 1160.17 $\pm$ 289.93  \\
ACKTR 	& {1210.0 $\pm$ 126.75}		 & 14.63 $\pm$ 1.36 	   		& 830.54 $\pm$ 58.71		&642.33 $\pm$ 143.2			& 754.33 $\pm$ 96.84 \\
PPO		& {2081.41$\pm$ 251.66}		 & 9.05 $\pm$ 1.95 	   			& 1240.08 $\pm$ 390.54		& 1231.21 $\pm$ 62.37	& \bf{2477.98 $\pm$ 16.71} \\
TRPO 	& {2082.81$\pm$ 151.81}			 &4.12 $\pm$ 3.7	   			& 806.12 $\pm$ 222.99			& 1096.48 $\pm$ 224.99	& 1999.87 $\pm$ 701.62 \\

\bottomrule
\end{tabular}
\end{scriptsize}
\end{center}
\vskip -0.2in
%\hskip -0.4in
\end{table*}

Our results are shown in Table (\ref{1results}) and learning curves in Figure (\ref{1}). Based on that, we can deduce that BAC generally outperforms all other algorithms in terms of both final performance and learning speed.

\begin{wrapfigure}{R}{0.45\textwidth}
\centering
%\begin{minipage}[t]{0.46\textwidth}
\includegraphics[width=1\linewidth]{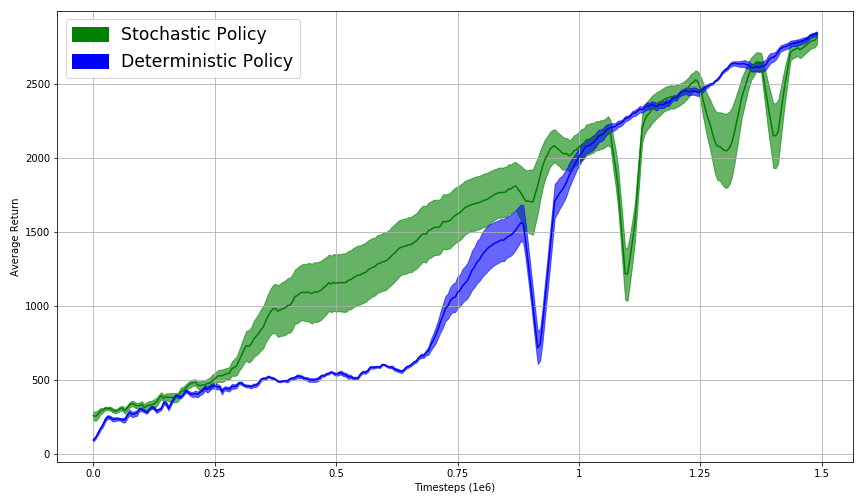}
	\caption[width=\columnwidth]{
		Stochastic vs. Deterministic BAC.
%		PPO (orange curve) performs susceptible to get trapped in local optima, while \pmethod/ (black curve) could always find the optimal policy.
	}\label{2}
%\end{minipage}
%\hspace{ 0.01\textwidth }
%\end{figure}
\end{wrapfigure} 
\subsection{Stochastic Policy vs Deterministic Policy}
In our experiments, behavior-guided actor critic was encouraged to learn a stochastic policy. However, given that our method is not restricted to a specific policy type, it seems tempting to compare the standard BAC to its deterministic variant, where we seek to optimize the objective in Equation \eqref{eq-objective} by following a deterministic policy. For this sake, we conducted our comparative evaluation on the high dimensional Ant Bullet environment and the experiments were run for 1.5 million steps.

One point worthy of noting is that autoencoders may fail to correctly represent novel and promising state-action pairs if they are in the neighborhood of a frequently visited state-action pair; this may not allow the policy to try all promising paths and lead to slow learning. To fill that gap, we considered adding small noise to the actions selected by the deterministic policy.

Figure (\ref{2}) shows the superiority of the deterministic variant in terms of stability, but no difference in the final performance when compared to the stochastic BAC meaning that the behavior function is achieving sufficient exploration regardless of the policy type and leading to a state-of-the-art performance along with the proposed architecture of BAC.

\subsection{Behavior Temperature}
Table (\ref{alpha}) shows behavior temperatures $\alpha$ used in our experiments:
\begin{table*}[h]
\centering
\caption{Temperature parameter}
\label{alpha}
%\vskip 0.15in
%\hskip -0.4in
\begin{center}
\begin{footnotesize}
\begin{tabular}{lcccccc}
\toprule
\bf{Environment} & \bf{Hopper} & \bf{Walker2D} & \bf{Reacher}    & \bf{Ant} & \bf{HalfCheetah}\\
\midrule
\bf{Static $\alpha$} 	      & 0.05	    & 0.04 	  & 0.05     	& 0.2 & 0.03 \\

\bottomrule
\end{tabular}
\end{footnotesize}
\end{center}
\vskip -0.2in
\hskip -0.4in
\end{table*}

We further explore a variant of BAC that dynamically weights the priority given to the discounted long-term reward function against the behavior function. We make use of a more expressive and tractable approach to endow our method with more efficient exploration and performance while overcoming the brittleness in setting $\alpha$. Specifically, the algorithm follows the performance gradient when it is making progress, and seeks to maximize the behavior function if stuck in a local optimum. To address this issue, consider the weighted objective $J(\pi)$ as following:
\begin{wrapfigure}{R}{0.45\textwidth}
\centering
%\begin{minipage}[t]{0.46\textwidth}
\includegraphics[width=1\linewidth]{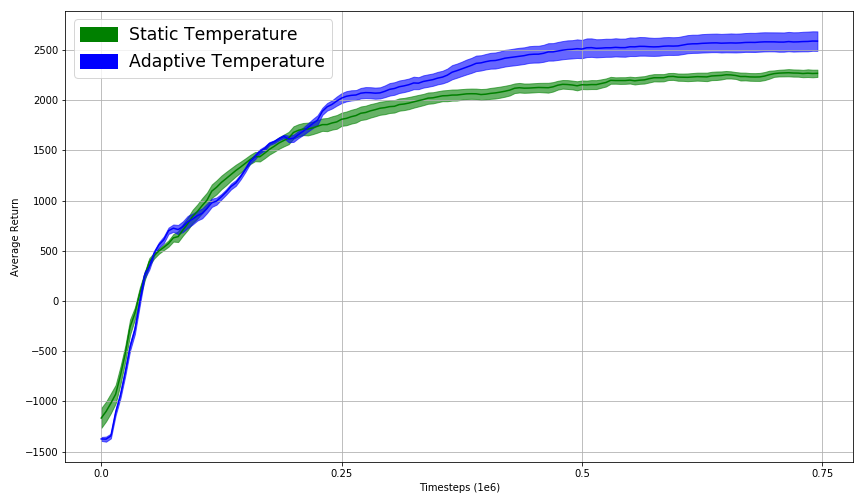}
	\caption[width=\columnwidth]{
		Adaptive vs. Static Temperature
%		PPO (orange curve) performs susceptible to get trapped in local optima, while \pmethod/ (black curve) could always find the optimal policy.
	}\label{alpha1}
%\end{minipage}
%\hspace{ 0.01\textwidth }
%\end{figure}
\end{wrapfigure}
\begin{equation}
J(\pi) = \E_{s\sim p, a \sim\pi} [\sum_{t_0}^{\infty}\gamma^t ( \alpha r(s_t,a_t)  + \gamma(1- \alpha) \psi( s_{t+1},a_{t+1}))]
\end{equation}
We make use of a neural network $\mathcal Q_{\eta}$ ($\mathcal Q$ for simplicity) to estimate the quality of an action in a certain state with respect to the policy of interest and future rewards only. Similar to the critic, $\mathcal Q$ seeks to minimize the loss $\mathcal L=\mathbb E[||\mathcal Q' -\mathcal Q||_2^2]$, where $\mathcal Q=\mathcal Q(s,g_{\theta}(\epsilon,s))$ and
$ \mathcal Q'= r + \gamma \mathcal Q(s',g_{\theta}(\epsilon,s'))$ s.t. $s,s'\sim \mathcal D$ .

The intuition behind using $\mathcal Q$ to come up with a formula of $\alpha$, is that when the distance between $\mathcal Q$ and its target is small, the algorithm is not making progress and is probably stuck in a local optimum, and vice versa. We draw an inspiration from this idea to define $\alpha$ as following:  
\begin{equation}
\alpha=\sigma\left(\omega\frac{ \overline{|\mathcal Q- \mathcal Q'|}-\min|\mathcal Q-\mathcal Q'|}{\max|\mathcal Q-\mathcal Q'|-\min|\mathcal Q-\mathcal Q'|}\right)
\label{alpha2}
\end{equation}
where $\sigma$ is the sigmoid function and $\omega=10$ for our convenience.
We empirically evaluated this method and compared it to the standard BAC where the experiments were run for 0.75 million steps on HalfCheetah Bullet environment and results were shown in Figure (\ref{alpha1}). 
%#######%%%%%%%%%%%%%%%%%##################################
\section{Conclusion}
Behavior-guided actor critic is designed to promote exploration and avoid local optima that most recent algorithms struggle with. Specifically, BAC benefits from the discrepancy between the agent-relative representation and the actual features of the state-action pairs to formulate a function that describes the agent's behavior and consequently maximizes it to avoid getting stuck in a certain region of the solution space. To achieve this, we draw inspiration from various RL frameworks and adopt an off-policy actor critic method to optimize the conventional RL objective along with the behavior function. We also provide theoretical analysis and prove that our method converges to the optimal policy allowing us to formalize the BAC algorithm. BAC has shown high competence and proved to be a competitive candidate to solve challenging, high-dimensional RL tasks.  Our method can also be extended to various learning frameworks. We believe that investigating the relative importance $\alpha$ of the behavior function and finding an adaptive schedule would be fruitful to consider in future works.
\section{Broader Impact}
This work does not currently present any foreseeable societal consequence.

\bibliographystyle{apa}
\end{document}